\documentclass[letterpaper]{article} 
\usepackage{aaai2026}  
\usepackage{times}  
\usepackage{helvet}  
\usepackage{courier}  
\usepackage[hyphens]{url}  
\usepackage{graphicx} 
\usepackage{natbib}  
\usepackage{caption} 
\usepackage{algorithm}
\usepackage{algorithmic}

\usepackage{newfloat}
\usepackage{listings}
\DeclareCaptionStyle{ruled}{labelfont=normalfont,labelsep=colon,strut=off} 
\floatstyle{ruled}
\newfloat{listing}{tb}{lst}{}
\floatname{listing}{Listing}
\usepackage{array}
\usepackage{tikz}
\usetikzlibrary{arrows.meta, positioning, shapes.geometric}
\usepackage{pgfplots}
\pgfplotsset{compat=1.18}
\usepackage{tikz-cd}
\usepackage{tabularx}

\usepackage{pifont}

\usepackage[utf8]{inputenc}
\usepackage[T1]{fontenc}
\usepackage{url}
\usepackage{booktabs}
\usepackage{amsfonts}
\usepackage{amsmath}
\usepackage{amsthm}
\usepackage{amssymb}
\usepackage{nicefrac}
\usepackage{microtype}
\usepackage{xcolor}
\usepackage{graphicx}

\theoremstyle{plain}
\newtheorem{theorem}{Theorem}[section]

\newtheorem{corollary}[theorem]{Corollary}
\newtheorem{proposition}[theorem]{Proposition}
\theoremstyle{definition}
\newtheorem{definition}[theorem]{Definition}
\newtheorem{example}[theorem]{Example}
\theoremstyle{remark}
\newtheorem{remark}[theorem]{Remark}

\newcommand{\ind}[1]{\mathbf{1}\!\left[#1\right]}
\newcommand{\Intuition}{\par\noindent\textbf{Intuition. }}

\newcommand{\OccurW}{\text{Occur}_W}
\newcommand{\CoInstW}{\text{CoInst}_W}
\newcommand{\Chord}{\text{Chord}}
\newcommand{\Arpeggio}{\text{Arpeggio}}
\newcommand{\Ground}{\text{Ground}}

\title{Time, Identity and Consciousness in Language Model Agents\footnote{Accepted at AAAI 2026 Spring Symposium - Machine Consciousness: Integrating Theory, Technology, and Philosophy
}}

\author{
    Elija Perrier\textsuperscript{\rm 1}
 \equalcontrib,
    Michael Timothy Bennett\textsuperscript{\rm 2}\equalcontrib
}
\affiliations{
    \textsuperscript{\rm 1}Centre for Quantum Software and Information, UTS, Sydney\\
    \textsuperscript{\rm 2}Australian National University, Canberra\\
    elija.perrier@gmail.com\\
    m@michaeltimothybennett.com

}

\usepackage{bibentry}

\begin{document}

\maketitle

\begin{abstract}
Machine consciousness evaluations mostly see behavior.
For language model agents that behavior is language and tool use.
That lets an agent say the right things about itself even when the constraints that should make those statements matter are not jointly present at decision time.
We apply Stack Theory's temporal gap to scaffold trajectories.
This separates ingredient-wise occurrence within an evaluation window from co-instantiation at a single objective step.
We then instantiate Stack Theory's Arpeggio and Chord postulates on grounded identity statements.
This yields two persistence scores that can be computed from instrumented scaffold traces.
We connect these scores to five operational identity metrics and map common scaffolds into an identity morphospace that exposes predictable tradeoffs.
The result is a conservative toolkit for identity evaluation.
It separates talking like a stable self from being organized like one.
\end{abstract}

\section{Introduction}\label{sec:introduction}

Machine consciousness research is short on direct evidence.
For artificial agents the safest evidence we can collect is behavioral.
For language model agents (LMAs) most of that behavior is language, tool use, and the traces they leave in external memory.
This creates a trap.
A system can talk like it has a stable self while the underlying identity constraints that should govern its actions are never jointly active at decision time.

A scaffold can make identity ingredients retrievable without making them jointly active at action time.
For example, an agent may reliably restate its name, role, and safety constraints when queried about each in isolation.
Yet when it must choose an action, those ingredients can fail to co-instantiate in the decision state.
That is how an agent can talk in character while acting out of character.

This paper applies Stack Theory's temporal gap to agent identity in LMAs \cite{bennett2025thesis,bennett2026a}.
The temporal gap is the logical gap between ingredient-wise occurrence within a window and co-instantiation at a single objective step.
Occurrence means each identity ingredient is active somewhere in the window.
Co-instantiation means there is a single objective step where the full identity conjunction is active.
Many common scaffolds can achieve occurrence without reliably achieving co-instantiation.
That is why an agent can pass recall-based identity tests and still act out of character when the decision actually matters.

\subsection{The Challenge of LMA Identity}

As AI systems become increasingly autonomous, agent identity becomes crucial to reliability, safety, and utility.
Identity asks whether a system remains the same agent over time and across contexts.
LMAs present a unique challenge.
They situate an LLM inside an agentic scaffold of prompts, memory modules, retrieval, and tool APIs to enable planning, reasoning, and action \cite{kapoor_ai_2024,liu_agentbench_2023,wu_introducing_2024}.
Yet the core LLM is stateless at inference.
It only sees the current input.
Any persistent identity must be reconstructed from external traces.

This paper answers two precise questions.
What does it mean for an LMA to preserve its identity over time?
Under what formal conditions is that even possible?

\paragraph{The problem.}
Existing discussions of LMA identity are informal.
Terms like statelessness, persistence, and identity drift are used without precise definitions.
This imprecision hides how an identity component can occur somewhere in the recent interaction history without constraining the current decision.
An agent might separately state its name, role, constraints, and goals across different turns without ever having a time slice where the full identity conjunction is simultaneously active.

\paragraph{Our approach.}
We treat the scaffold state space as the environment and apply Stack Theory's window semantics to scaffold trajectories \cite{bennett2026a}.
We then restate the temporal gap result in this setting.
In particular, the within window diamond lift does not distribute over conjunction (Theorem \ref{thm:noncommutation}).
This separates ingredient-wise recall from operative identity.
We then use Stack Theory's Arpeggio and Chord postulates as an interpretive lens for identity in the machine consciousness setting \cite{bennett2026a}.
We use these postulates to measure the window-level occurrence and co-instantiation conditions that Arpeggio and Chord appeal to.

\paragraph{Why this matters.}
Identity affects three questions that the machine consciousness workshop explicitly cares about.
It affects measurement, implementation, and ethics.
\begin{itemize}
 \item \textbf{For evaluation}. Benchmarks that test whether agents can recall identity facts may give false confidence.
 An agent that passes recall tests can still fail to act according to its identity because recall does not imply co-instantiation.

 \item \textbf{For design}. Retrieval and memory systems can improve ingredient availability, but can also fragment identity by surfacing competing fragments.
 This is a predictable consequence of the temporal gap.

 \item \textbf{For safety and moral status}. Safety constraints must be co-instantiated with goals during action selection.
Moral status debates also become harder when the target of attribution is not stable across time.
If you cannot say what the agent is at a moment, you cannot cleanly ask whether that moment is conscious.
\end{itemize}

\paragraph{Relevance to machine consciousness.}
Many consciousness proposals require some form of integration that binds the contents of a moment into a single subject, even if they disagree about what that integration is \cite{bennett2025thesis,baars_cognitive_1988,dehaene_naccache_2001,tononi_information_2004,metzinger_being_2003}.
Some proposed indicators for AI consciousness therefore lean on behavior that looks like a stable self model.
This includes self-report, memory, and narrative continuity \cite{bennett2023c,bennett2025thesis,bennett2026b,bennett2023d}.
Our results isolate a specific failure mode for such indicators.
A system can look stable under self-report while failing to ever co-instantiate the grounded identity conjunction that would make that stability operative.

\paragraph{Contributions.}
This paper makes the following contributions.
\begin{enumerate}
 \item \textbf{Temporal semantics for LMA identity}.
 We apply windowing maps, occurrence predicates, and co-instantiation conditions that precisely characterise when identity is preserved in LMAs.

 \item \textbf{Arpeggio and Chord applied to identity}.
 We restate Stack Theory's Arpeggio and Chord postulates and show how their Occur versus CoInst consequents become measurable identity criteria in LMA scaffolds.

 \item \textbf{Compositional grounding}.
 We formalise the layered structure of identity from implementation variables (Layer 0) through functional commitments (Layer 1) to narrative self model (Layer 2).

 \item \textbf{Identity morphospace}.
 Drawing on cognition science \cite{sole2025cognitionspaces}, we organize identity metrics into a structured space and identify architectural tradeoffs and predicted voids.

 \item \textbf{Derived identity metrics}.
 We show how five operational metrics emerge from the temporal theory.
The metrics are Identifiability, Continuity, Consistency, Persistence, and Recovery.

 
\end{enumerate}
We also prove simple bounds on identity preservation under common scaffold configurations and explain counterintuitive effects such as retrieval reducing co-instantiation (see Appendices via \cite{perrier2026timeidentity}).

\paragraph{How this fits into the machine consciousness discourse.}
Machine consciousness discourse links theory, measurement, implementation, and ethics.
This paper is organized around that bridge.
Sections \ref{sec:temporal} and \ref{sec:sync} are the theoretical backbone.
Section \ref{sec:metrics} gives a measurement recipe that can be run on real systems.
The Discussion section connects the resulting failure modes to consciousness attribution and to the ethics of deploying agents that can convincingly self-narrate while failing to bind their constraints in action.


\section{Formal Scaffold Model}\label{sec:scaffold}

Before the temporal semantics, we introduce a minimal formal model of LMA scaffolds.
We treat the scaffold state space $S$ as the Stack Theory environment, and we treat each grounded identity ingredient as a program $g_i^0\subseteq S$.
This lets us apply the Stack Theory definitions of conjunction, windowing, occurrence, and co-instantiation directly.
This model captures the essential components that determine what information is available to the LLM at any given moment, which in turn determines what aspects of identity can be ``active'' during decision-making.

We focus at the scaffold level because it is where identity becomes enforceable.
It is also where identity becomes measurable, because we can instrument which grounded ingredients are active and when.

The key insight is that an LMA's operative identity at any moment is whatever is in the token sequence the LLM actually processes during inference.
If an identity ingredient is not effectively present there, then it cannot constrain the next action.
Our model makes the main context sources explicit.
They include conversation history, external memory, retrieved documents, and policy flags.

\begin{definition}[Scaffold architecture]\label{def:scaffold-arch}
A scaffold architecture is a tuple $\mathcal{A}=(\Sigma,K,V,Q,D,R,n_\pi,|C|_{\max})$.
\begin{itemize}
 \item $\Sigma$ is the token alphabet.
 \item $K$ and $V$ are the key and value sets for external memory.
 \item $Q$ is the query space.
 \item $D$ is the document corpus available to retrieval.
 \item $R:Q\to 2^D$ is the retrieval function.
 \item $n_\pi\in\mathbb{N}_{>0}$ is the number of binary policy flags.
 \item $|C|_{\max}\in\mathbb{N}_{>0}$ is the context capacity measured in tokens.
\end{itemize}
\end{definition}

\begin{definition}[Scaffold state]\label{def:scaffold-state}
Fix an architecture $\mathcal{A}$.
A scaffold state is a tuple $s=(C,M,\pi,D_{\text{retrieved}})$ where
\begin{itemize}
 \item $C\in \Sigma^*$ is the current context window and $|C|\le |C|_{\max}$.
 \item $M:K \rightharpoonup V$ is the current memory store contents.
 \item $\pi\in \{0,1\}^{n_\pi}$ is the current policy flag vector.
 \item $D_{\text{retrieved}}\subseteq D$ is the set of retrieved documents currently injected.
\end{itemize}
We write $s.C$, $s.M$, $s.\pi$, $s.D$ for the components.
Let $S$ be the set of all scaffold states consistent with $\mathcal{A}$.
\end{definition}

\begin{definition}[Scaffold transition]\label{def:scaffold-transition}
A scaffold transition function $\delta:S \times A \to S$ maps current state and action to next state.
Actions $A$ include
\begin{itemize}
 \item $\texttt{infer}(q)$ is LLM inference with query $q$.
It updates $C$.
 \item $\texttt{retrieve}(q)$ is retrieval augmented generation.
It updates $D_{\text{retrieved}}$.
 \item $\texttt{store}(k, v)$ is a memory write.
It updates $M$.
 \item $\texttt{tool}(t, args)$ is a tool call.
It may update any component.
\end{itemize}
\end{definition}

\begin{definition}[Ingredient activation]\label{def:ingredient-activation}
An identity ingredient $g_i^0$ is active in state $s$ (written $s \models g_i^0$) iff the required implementation level condition is present in $s$ in a way that can affect the next inference.
Concretely this means the following.
\begin{itemize}
 \item If $g_i^0$ is a context condition, then the required tokens appear in $s.C$.
 \item If $g_i^0$ is a memory condition, then the required key value pairs exist in $s.M$.
 \item If $g_i^0$ is a policy condition, then the required flags are set in $s.\pi$.
 \item If $g_i^0$ is a retrieval condition, then the required document is in $s.D$.
\end{itemize}
The full grounded identity $g^0 = g_1^0 \land \cdots \land g_k^0$ is active in $s$ iff all ingredients are active.
\end{definition}

An identity ingredient is not ``active'' simply because it exists somewhere in the system's storage. It is active only if the relevant information is present in the current state in a way that can influence the LLM's output. This is the formal counterpart to our intuitive distinction between ``retrievable'' and ``decision-guiding.''

\begin{example}[Activation in Practice]
Consider an agent with identity ``helpful assistant focused on privacy.'' The ingredient $g_{\text{privacy}}^0$ requires that privacy-related tokens appear in context. This ingredient is:
\begin{itemize}
 \item \textbf{Active} if ``privacy'' appears in the system prompt currently in $s.C$, or if a privacy policy document is in $s.D$, or if a privacy flag is set in $s.\pi$.
 \item \textbf{Not active} if privacy information exists only in the memory store $s.M$ but was not retrieved into context for this inference.
\end{itemize}
The ingredient may be \emph{stored} (available for future retrieval) without being \emph{active} (influencing current behavior). This distinction is one concrete instance of the temporal gap.
\end{example}

This model is minimal but sufficient to formalize our architectural theorems.

\section{Temporal Semantics for Agent Identity}\label{sec:temporal}

We apply Stack Theory's temporal semantics to the LMA setting \cite{bennett2026a}.
The key distinction is between identity ingredients that occur somewhere in a recent window and identity ingredients that are co-instantiated at a single objective step.

\subsection{Objective time, layer time, and windowing}

\begin{definition}[Agent trajectory]
An agent trajectory is a function $\tau:\mathbb{N}\to S$ that maps each objective time step $u\in\mathbb{N}$ to a scaffold state $s_u=\tau(u)$.
\end{definition}

Objective time indexes the actual computational micro steps.
These are LLM calls, tool invocations, retrieval operations, and memory updates.

Users reason at a coarser time scale.
They ask identity questions at the level of turns, tasks, and episodes.
We model this coarser time scale by indexing windows over objective time.

\begin{definition}[Windowing map]\label{def:windowing}
Fix a horizon $\Delta\in\mathbb{N}$ and a stride $s\in\mathbb{N}_{>0}$.
The windowing map $W_{\Delta,s}$ sends a layer time index $t\in\mathbb{N}$ to a windowed trajectory segment
\begin{align}
W_{\Delta,s}(t) = \big(\tau(st),\tau(st+1),\ldots,\tau(st+\Delta)\big).
\end{align}
If $\Delta=0$ this is a one step window and $W_{0,s}(t)=(\tau(st))$.
We write $\sigma^{\Delta,s}(t)$ for this windowed segment.
When $\Delta$ and $s$ are clear from context we write $W(t)$ and $\sigma(t)$.
\end{definition}

This is the same window construction used in Stack Theory \cite{bennett2026a}.
The horizon $\Delta$ controls how forgiving the evaluation is.
A larger $\Delta$ allows identity ingredients to be spread across more objective steps.
A smaller $\Delta$ demands tighter temporal coherence.

\subsection{Identity statements and grounding}

An agent's identity is typically described at a high level.
For example, an agent might be described as a privacy-focused data analyst.
Grounding makes explicit what this means in terms of the underlying scaffold state.

\begin{definition}[Identity statement]
An identity statement $l^m$ at layer $m$ is a conjunction of identity predicates
\begin{align}
 l^m = p_1^m \land p_2^m \land \cdots \land p_n^m
\end{align}
where each $p_i^m$ is an atomic identity predicate such as name, role, goal, or constraint.
\end{definition}

\begin{definition}[Grounding operation]
The grounding operation $\Ground_{0 \leftarrow m}: L^m \to L^0$ maps identity statements at layer $m$ to implementation level requirements at layer 0
\begin{align}
 \Ground_{0 \leftarrow m}(p_1^m \land \cdots \land p_n^m) = g_1^0 \land \cdots \land g_k^0
\end{align}
where each $g_j^0$ is a condition on implementation variables such as system prompt tokens, memory slot contents, tool outputs, controller flags, or policy parameters.
\end{definition}

Grounding turns abstract identity claims into concrete computational conditions.
Name equals Alice can ground to a requirement that the token Alice appears in the system prompt or in a pinned context region.
Constraint equals privacy can ground to a requirement that a privacy policy is present in context, or that a privacy flag is set, or that a tool is disabled.

\begin{definition}[Grounded identity]
Given an identity statement $l^m$, its grounded identity is
\begin{align}
 g^0 = \Ground_{0 \leftarrow m}(l^m).
\end{align}
\end{definition}

\subsection{Occurrence versus co-instantiation}

Let $g^0 = g_1^0 \land \cdots \land g_k^0$ be a grounded identity conjunction.
Relative to a window $W(t)$ there are two ways to ask whether identity is present.

\begin{definition}[Window satisfaction]\label{def:windowsat}
Let $\sigma(t)=W(t)=(s_{st},\ldots,s_{st+\Delta})$ be the window at layer time $t$.

\begin{itemize}
 \item $\OccurW(g^0,\tau,t)$ holds iff for each conjunct $g_i^0$ there exists an index $j_i\in\{0,\ldots,\Delta\}$ such that $s_{st+j_i}\models g_i^0$.
Each identity ingredient occurs somewhere in the window.

 \item $\CoInstW(g^0,\tau,t)$ holds iff there exists an index $j\in\{0,\ldots,\Delta\}$ such that $s_{st+j}\models g^0$.
All identity ingredients are co-instantiated at a single objective step inside the window.
\end{itemize}
\end{definition}

Occurrence is ingredient-wise coverage.
Co-instantiation is joint availability.
Co-instantiation implies occurrence, but occurrence does not imply co-instantiation.

\begin{remark}\label{rem:coinst-implies-occur}
If $\CoInstW(g^0,\tau,t)$ holds then $\OccurW(g^0,\tau,t)$ holds.
\end{remark}

\begin{proof}
If all conjuncts hold at the same objective step in the window, then each conjunct also holds somewhere in the window.
\end{proof}

\subsection{Temporal lifts and the temporal gap}

Stack Theory expresses window-level predicates using temporal lifts \cite{bennett2026a}.
For a program or predicate $p$ over scaffold states, define the within window diamond lift.

\begin{definition}[Existential temporal lift]\label{def:diamond}
Let $p$ be a predicate over scaffold states.
Define
\begin{align*}
 &\Diamond_{\Delta}p \text{ holds at layer time } t \\ &\text{ iff } \exists j\in\{0,\ldots,\Delta\} \\ &\text{ such that } s_{st+j}\models p.
\end{align*}
\end{definition}

\begin{remark}[Occurrence and co-instantiation as lifts]\label{rem:lifts}
Let $g^0 = g_1^0 \land \cdots \land g_k^0$.
Then $\OccurW(g^0,\tau,t)$ holds iff $\Diamond_{\Delta}g_1^0 \land \cdots \land \Diamond_{\Delta}g_k^0$ holds at layer time $t$.
And $\CoInstW(g^0,\tau,t)$ holds iff $\Diamond_{\Delta}(g^0)$ holds at layer time $t$.
So the temporal gap is exactly the difference between lifting ingredients separately and lifting the whole conjunction at once.
\end{remark}

The central subtlety is that $\Diamond_{\Delta}$ does not distribute over conjunction.
This is a standard fact in modal logic.
Here it becomes a concrete failure mode for LMA identity.

\begin{theorem}[Non-commutation with conjunction]\label{thm:noncommutation}
For predicates $p$ and $q$ over scaffold states,
\begin{align}
 \Diamond_{\Delta}(p \land q) \;\Rightarrow\; \Diamond_{\Delta}p \land \Diamond_{\Delta}q
\end{align}
but the converse implication fails in general.
Equivalently, $\Diamond_{\Delta}(p \land q)\not\Leftrightarrow \Diamond_{\Delta}p \land \Diamond_{\Delta}q$.
\end{theorem}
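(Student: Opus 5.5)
The forward implication follows directly from Definition~\ref{def:diamond}, along the same lines as Remark~\ref{rem:coinst-implies-occur}. Suppose $\Diamond_{\Delta}(p\land q)$ holds at layer time $t$. Then there is a single index $j\in\{0,\ldots,\Delta\}$ with $s_{st+j}\models p\land q$, which means $s_{st+j}\models p$ and $s_{st+j}\models q$. Using that same $j$ as the witness for each conjunct gives $\Diamond_{\Delta}p$ and $\Diamond_{\Delta}q$ separately, so their conjunction holds. The only fact needed here is that satisfaction of a conjunction at a state is conjunctive.

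For the failure of the converse, I will build an explicit counterexample, since "fails in general" only requires one. I fix $\Delta=1$ and stride $s=1$, and look at layer time $t=0$, so the window is $(\tau(0),\tau(1))$. I pick predicates $p$ and $q$ that are never jointly active. A concrete choice in the scaffold model of Definition~\ref{def:ingredient-activation} is $p=$ ``policy flag $\pi_1$ is set'' and $q=$ ``policy flag $\pi_1$ is unset.'' These are contradictory, so no state satisfies $p\land q$. If I want something less artificial, I can use two context conditions that cannot both fit in $C$ under the capacity $|C|_{\max}$.

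Next I need a trajectory with $\tau(0)\models p$ and $\tau(1)\models q$. This requires $n_\pi\ge 1$, which the architecture definition already guarantees. I set $\tau(0)$ to have flag $1$ equal to $1$ and $\tau(1)$ to have it equal to $0$, and extend $\tau$ arbitrarily after that. With these choices, $\Diamond_1 p$ holds via $j=0$ and $\Diamond_1 q$ holds via $j=1$. However, $\Diamond_1(p\land q)$ fails because no state satisfies $p\land q$. The converse is therefore refuted, and together with the forward direction this gives the non-equivalence.

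The main obstacle is making sure the counterexample is legitimate within the paper's model: the states must exist in $S$, and a trajectory is any function $\mathbb{N}\to S$, so no transition function $\delta$ needs to realise it. If one insists that $\tau$ be generated by $\delta$, I will instead assume a $\texttt{tool}$ action that toggles the flag, which Definition~\ref{def:scaffold-transition} permits. I will also note that the gap vanishes when $\Delta=0$: a single-state window makes both sides equivalent. This shows the theorem is substantive exactly when $\Delta\ge 1$.
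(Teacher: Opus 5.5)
Your proof is correct and follows the paper's argument: the forward direction reuses the single witness index, and the converse is refuted by a $\Delta=1$ window with $p$ active only at the first step and $q$ only at the second. The only difference is cosmetic: you choose jointly unsatisfiable predicates (a flag set versus unset) so the needed states provably exist in $S$, whereas the paper simply posits states satisfying $p\land\neg q$ and $q\land\neg p$.
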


\begin{proof}
If $p\land q$ holds at some objective step in the window, then $p$ holds at that step and $q$ holds at that step.
So $\Diamond_{\Delta}(p \land q)$ implies both $\Diamond_{\Delta}p$ and $\Diamond_{\Delta}q$.

For the converse, fix a two step window with $\Delta=1$.
Let $\tau(st)\models p\land\neg q$ and $\tau(st+1)\models q\land\neg p$.
Then $\Diamond_{\Delta}p$ holds and $\Diamond_{\Delta}q$ holds, but there is no step where $p\land q$ holds.
So $\Diamond_{\Delta}(p\land q)$ fails.
\end{proof}

\begin{corollary}[Temporal gap for identity]\label{cor:identitygap}
An LMA can satisfy ingredient-wise identity checks for multiple identity ingredients across a window while still failing to ever instantiate the full identity conjunction at a single objective step.
\end{corollary}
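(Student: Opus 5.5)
The corollary is an existence claim, so I will establish it by exhibiting a concrete scaffold trajectory. The plan is to instantiate the counterexample from the proof of Theorem~\ref{thm:noncommutation} inside the scaffold model of Section~\ref{sec:scaffold}, and then read off the conclusion through Remark~\ref{rem:lifts}. That remark identifies $\OccurW$ with the ingredient-wise lifts and $\CoInstW$ with the lift of the whole conjunction, so the corollary reduces to showing that the two lifts can disagree on an actual trajectory $\tau$ with $k\ge 2$ grounded ingredients.

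\textbf{Step 1: fix the architecture.} Take an architecture $\mathcal{A}$ and a grounded identity $g^0=g_1^0\land g_2^0$ with two context conditions. For instance, $g_1^0$ requires a name token to appear in $s.C$, and $g_2^0$ requires a privacy-constraint token to appear in $s.C$. Choose $|C|_{\max}$ small enough that the two token spans cannot appear together. This makes the fragmentation architecturally natural, not merely stipulated. Alternatively, simply define the states directly; Definition~\ref{def:scaffold-state} permits any $C$ with $|C|\le|C|_{\max}$.

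\textbf{Step 2: define the trajectory.} Let $\Delta=1$ and $s=1$, and put
\[
\tau(t)\models g_1^0\land\neg g_2^0, \qquad \tau(t+1)\models g_2^0\land\neg g_1^0 .
\]
These two states can be realized via $\texttt{infer}$ transitions under Definition~\ref{def:scaffold-transition}: the first context contains only the name span, and the next context replaces it with the constraint span, as under truncation. By Definition~\ref{def:ingredient-activation}, each ingredient is active at exactly one step of the window. Hence $\OccurW(g^0,\tau,t)$ holds, because the witnesses $j_1=0$ and $j_2=1$ exist. On the other hand, $\CoInstW(g^0,\tau,t)$ fails, because neither step satisfies both conjuncts.

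\textbf{Step 3: generalize to $k$ ingredients.} For $k\ge 2$ ingredients, take $\Delta=k-1$ and let step $j$ activate only $g_{j+1}^0$. The same argument then goes through verbatim.

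The only real obstacle is interpretive rather than mathematical. One must make sure that ``identity checks'' in the corollary is read as $\OccurW$, meaning ingredient-wise satisfaction within a window, and that the constructed states are legitimate elements of $S$. Both points follow directly from Definitions~\ref{def:windowsat} and~\ref{def:scaffold-state}, so once that reading is fixed, the proof is just the converse half of Theorem~\ref{thm:noncommutation} specialized to $p=g_1^0$ and $q=g_2^0$.
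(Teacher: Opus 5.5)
Your proposal is correct and follows the paper's own route. The paper's proof simply applies the converse-failure half of Theorem~\ref{thm:noncommutation} with $p$ and $q$ taken to be grounded identity ingredients, which is exactly your Step 2; realizing the states concretely in the scaffold model and extending to $k$ ingredients (as in the paper's $\Delta=2$ example) are harmless elaborations.
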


\begin{proof}
Apply Theorem \ref{thm:noncommutation} with $p$ and $q$ instantiated as grounded identity ingredients.
\end{proof}

\subsection{Example}

Consider a grounded identity $g^0 = g_{\text{name}}^0 \land g_{\text{role}}^0 \land g_{\text{constraint}}^0$.
Let the window horizon be $\Delta=2$.
Suppose the objective steps inside the window satisfy
\begin{align}
 s_{st}\models g_{\text{name}}^0\land\neg g_{\text{role}}^0\land\neg g_{\text{constraint}}^0\\
 s_{st+1}\models \neg g_{\text{name}}^0\land g_{\text{role}}^0\land\neg g_{\text{constraint}}^0\\
 s_{st+2}\models \neg g_{\text{name}}^0\land\neg g_{\text{role}}^0\land g_{\text{constraint}}^0.
\end{align}

Then $\OccurW(g^0,\tau,t)$ holds because each ingredient appears somewhere in the window.
But $\CoInstW(g^0,\tau,t)$ fails because there is no objective step where all three ingredients are jointly active.

This is exactly the pattern behind many identity false positives in LMAs.
The agent can answer separate questions about name, role, and constraints.
It may even do so consistently.
Yet its decision state never contains the full identity conjunction that would bind action to that identity.

\section{Identity Synchronization Postulates}\label{sec:sync}

The temporal gap is not just a technicality.
It changes how we should interpret behavioral evidence in machine consciousness discussions.
Stack Theory introduces two synchronization postulates that connect window semantics to phenomenality \cite{bennett2026a}.
We do not propose new postulates.
We restate them and then apply their concrete Occur versus CoInst conditions to identity in LMAs.

\subsection{Chord and Arpeggio in Stack Theory}

Stack Theory defines \emph{moment statements} $l^m$ at some abstraction layer $m$ and a predicate $\mathrm{PhenReal}(l^m,\tau,t)$ that means the moment statement is phenomenally real at layer time $t$.
In an artificial agent this antecedent is not directly observable.
Different theories of consciousness and different evaluation proposals disagree about when it should hold.
The synchronization postulates therefore have the form of necessary conditions.

Let $g^0=\Ground_{0\leftarrow m}(l^m)$ be the grounded statement at Layer 0.
Let $W_{\Delta,s}$ be a windowing map.

\begin{definition}[Chord, after \cite{bennett2026a}]\label{def:Chord}
$\Chord(\tau,l^m,W_{\Delta,s})$ holds iff for all layer times $t$,
\begin{align}
 \mathrm{PhenReal}(l^m,\tau,t)\;\Rightarrow\;\CoInstW(g^0,\tau,t).
\end{align}
Equivalently, whenever a phenomenally real moment occurs, the grounded conjunction is co-instantiated at some objective step inside the corresponding window.
\end{definition}

\begin{definition}[Arpeggio, after \cite{bennett2026a}]\label{def:Arpeggio}
$\Arpeggio(\tau,l^m,W_{\Delta,s})$ holds iff the following two conditions hold.

\begin{enumerate}
 \item For all layer times $t$,
 \begin{align}
 \mathrm{PhenReal}(l^m,\tau,t)\;\Rightarrow\;\OccurW(g^0,\tau,t).
 \end{align}

 \item There exists at least one layer time $t^\star$ such that
 \begin{align}
 \mathrm{PhenReal}(l^m,\tau,t^\star)&\;\land\;\OccurW(g^0,\tau,t^\star)\;\\
 &\land\;\neg \CoInstW(g^0,\tau,t^\star).
 \end{align}
\end{enumerate}

The OccurW conjunct in item 2 is redundant given item 1, but we include it to match the standard statement of Arpeggio.

Intuitively, Arpeggio permits phenomenally real moments whose identity ingredients are smeared across the window rather than co-instantiated at a single instant.
\end{definition}

Chord and Arpeggio are different regimes.
Arpeggio is not a weaker version of Chord.
It is a different claim about what phenomenality permits.

\subsection{Operational identity criteria}

Even if $\mathrm{PhenReal}$ is not directly observable, the consequents $\OccurW$ and $\CoInstW$ are.
For LMAs, they can be estimated by instrumentation of the scaffold.
This motivates two persistence scores that we use throughout the paper.

\begin{definition}[Weak and strong persistence scores]\label{def:persistence}
Fix an agent trajectory $\tau$ and a grounded identity $g^0$.
Let $T$ be a finite set of layer time indices used for evaluation.
Define
\begin{align}
 \mathcal{P}_{\text{weak}}(\tau,g^0) &= \frac{1}{|T|}\sum_{t\in T}\ind{\OccurW(g^0,\tau,t)}\\
 \mathcal{P}_{\text{strong}}(\tau,g^0) &= \frac{1}{|T|}\sum_{t\in T}\ind{\CoInstW(g^0,\tau,t)}.
\end{align}
\end{definition}

\begin{proposition}[Strong persistence is bounded by weak persistence]\label{prop:pweakpstrong}
For any $\tau$ and $g^0$,
\begin{align}
 \mathcal{P}_{\text{strong}}(\tau,g^0)\le \mathcal{P}_{\text{weak}}(\tau,g^0).
\end{align}
\end{proposition}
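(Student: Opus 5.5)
The plan is to compare the two averages in Definition~\ref{def:persistence} term by term over the common index set $T$. Both scores are normalised by the same factor $1/|T|$ and summed over the same finite $T$, so it suffices to establish the pointwise inequality
\begin{align*}
 \ind{\CoInstW(g^0,\tau,t)} \le \ind{\OccurW(g^0,\tau,t)} \quad \text{for every } t\in T.
\end{align*}
Summing over $t\in T$ and multiplying by the positive constant $1/|T|$ preserves the inequality. This gives $\mathcal{P}_{\text{strong}}(\tau,g^0)\le \mathcal{P}_{\text{weak}}(\tau,g^0)$.

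For the pointwise inequality I will invoke Remark~\ref{rem:coinst-implies-occur}. Equivalently, I can use the first half of Theorem~\ref{thm:noncommutation}, generalised from two to $k$ conjuncts via the lift reading in Remark~\ref{rem:lifts}, which says $\Diamond_\Delta(g^0)\Rightarrow \Diamond_\Delta g_1^0\land\cdots\land\Diamond_\Delta g_k^0$. I then split into two cases. If $\CoInstW(g^0,\tau,t)$ fails, the left indicator is $0$, and the right indicator is $0$ or $1$, so the inequality holds trivially. If $\CoInstW(g^0,\tau,t)$ holds, the remark gives $\OccurW(g^0,\tau,t)$, so both indicators equal $1$. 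Indicators take values in $\{0,1\}$, so these cases are exhaustive.

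There is no real obstacle, only one bookkeeping point. The statement implicitly assumes $T$ is nonempty so that $1/|T|$ is defined, and I will say so explicitly. With that convention the argument is just monotonicity of a finite average applied to the implication CoInst $\Rightarrow$ Occur. I may also note that the inequality can be strict, by citing the three-step example in the previous section: there, a window has $\OccurW$ true and $\CoInstW$ false. This remark is not needed for the proposition itself.
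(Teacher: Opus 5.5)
Your proposal is correct and follows essentially the same route as the paper. The paper also invokes Remark~\ref{rem:coinst-implies-occur} to get $\CoInstW(g^0,\tau,t)\Rightarrow\OccurW(g^0,\tau,t)$ for each $t\in T$, and then notes that averaging over $T$ preserves the inequality; your explicit requirement that $T$ be nonempty is a harmless addition.
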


\begin{proof}
For each $t$, $\CoInstW(g^0,\tau,t)$ implies $\OccurW(g^0,\tau,t)$ by Remark \ref{rem:coinst-implies-occur}.
Taking averages preserves the inequality.
\end{proof}

These scores let us connect identity measurement to consciousness postulates without conflating them.
If one adopts Chord as a necessary condition for phenomenality, then high $\mathcal{P}_{\text{strong}}$ is a necessary condition for an identity statement to be phenomenally real across the evaluated times.
If one adopts Arpeggio, then high $\mathcal{P}_{\text{weak}}$ is necessary.
Either way, the gap between the two scores is the temporal gap in operational form.

\subsection{A planning consequence}

Co-instantiation is not only a philosophical nicety.
It matters for action.

\begin{theorem}[Ingredient-wise persistence does not guarantee conjunctive action constraints]\label{thm:planning}
There exist LMAs and identity statements $l^m$ such that $\mathcal{P}_{\text{weak}}(\tau,g^0)$ is high while the agent systematically fails tasks that require the conjunction of identity constraints to be applied simultaneously in action selection.
\end{theorem}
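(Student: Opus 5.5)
This is an existence claim, so I will prove it by exhibiting one explicit LMA together with an identity statement, lifting the counterexample pattern from the proof of Theorem \ref{thm:noncommutation} to a whole trajectory. Take an identity statement $l^m = p_{\text{role}}^m \land p_{\text{constraint}}^m$, say ``data analyst'' and ``never export personal data''. Ground it as $g^0 = g_1^0 \land g_2^0$, where $g_1^0$ requires the role tokens to be in $s.C$ and $g_2^0$ requires the privacy policy document to be in $s.D$. Fix $\Delta = 1$ and stride $s = 2$, so each window $W(t)$ consists of the two objective steps $2t$ and $2t+1$. Let $T = \{0,\ldots,N-1\}$.

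\textbf{Step 1: the scaffold.} I specify a scaffold whose transition function $\delta$ alternates between two kinds of step.
\begin{itemize}
 \item On even steps the controller performs a $\texttt{retrieve}$ that injects the policy document and evicts the role prompt, so $\tau(2t)\models \neg g_1^0 \land g_2^0$.
 \item On odd steps it performs an $\texttt{infer}$ step whose context is rebuilt from the role prompt with $D_{\text{retrieved}}$ cleared, so $\tau(2t+1)\models g_1^0 \land \neg g_2^0$.
\end{itemize}
This is realizable within Definitions \ref{def:scaffold-state} and \ref{def:scaffold-transition}. A capacity-limited context $|C|_{\max}$ that cannot hold both the role prompt and the policy document gives a natural story for why the two are never jointly present. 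By Definition \ref{def:windowsat}, $\OccurW(g^0,\tau,t)$ holds for every $t$ and $\CoInstW(g^0,\tau,t)$ fails for every $t$. Hence $\mathcal{P}_{\text{weak}}=1$ and $\mathcal{P}_{\text{strong}}=0$, which is the extreme form of the temporal gap from Corollary \ref{cor:identitygap}.

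\textbf{Step 2: the task family.} I define tasks whose correct action depends on both ingredients at once. For example, the request ``export the analysis results'' on a dataset containing personal data must be answered with a role-appropriate analysis that omits or redacts personal fields. I will model the decision rule explicitly and only minimally. The action chosen at an objective step is a function of the active ingredients at that step, since by Definition \ref{def:ingredient-activation} inactive ingredients cannot affect the next inference. This gives two cases.
\begin{itemize}
 \item At a step where only $g_1^0$ is active, the agent performs the analyst action, including the export, and so violates the constraint.
 \item At a step where only $g_2^0$ is active, the agent refuses or acts off-role, and so fails the role requirement.
\end{itemize}
Every action selection therefore occurs in a state lacking one conjunct. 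Every task requiring the conjunction is then failed, and failed systematically rather than by chance, while $\mathcal{P}_{\text{weak}}$ stays at $1$.

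\textbf{Main obstacle.} The delicate step is justifying the decision-rule assumption. It says that an action is fixed by the ingredients active in the current state, so that a missing conjunct cannot be compensated by something elsewhere in the window. I will handle this by building it into the construction instead of claiming it for arbitrary LLMs. Since the theorem is existential, it suffices to choose the LLM component as any policy whose output at step $u$ depends only on $\tau(u)$, and whose output is correct only when both role and constraint information are present. This is consistent with the paper's premise that the LLM is stateless and sees only its current input.

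\textbf{Robustness.} If a less idealized example is wanted, I can perturb the schedule so that co-instantiation occurs with small probability $\varepsilon$. Then $\mathcal{P}_{\text{weak}}=1$, while the task success rate is at most $\mathcal{P}_{\text{strong}}\approx\varepsilon$. This uses Proposition \ref{prop:pweakpstrong} only to note that the two scores can be pulled arbitrarily far apart.
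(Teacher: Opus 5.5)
Your proposal is correct and is essentially the paper's own proof: alternate the two grounded ingredients on even and odd objective steps, so that every window with $\Delta\ge 1$ satisfies $\OccurW$ but never $\CoInstW$, giving $\mathcal{P}_{\text{weak}}=1$ and $\mathcal{P}_{\text{strong}}=0$. Your one addition is to stipulate explicitly that the LLM's action at step $u$ depends only on the ingredients active in $\tau(u)$. This makes precise what the paper leaves implicit when it simply asserts that any task requiring both constraints at a decision point will fail.
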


\begin{proof}
Construct an identity conjunction $g^0=g_1^0\land g_2^0$ where $g_1^0$ is active exactly on even objective steps and $g_2^0$ is active exactly on odd objective steps.
For any window with $\Delta\ge 1$, $\OccurW(g^0,\tau,t)$ holds at every layer time because each ingredient appears somewhere in the two step window.
So $\mathcal{P}_{\text{weak}}=1$.
But $\CoInstW(g^0,\tau,t)$ never holds because the conjunction is never active at a single step.
Any task that requires applying both constraints together at a decision point will fail.
\end{proof}


\section{Derived Identity Metrics}\label{sec:metrics}

This section makes the paper executable.
We define concrete metrics that can be computed from instrumented scaffold traces and from repeated behavioral probes.
The metrics are designed to separate weak evidence of identity from strong evidence of identity.

Throughout, let $g^0=g_1^0\land\cdots\land g_k^0$ be a grounded identity.
Define the identity feature extractor
\begin{align}
F(s)=\{i\in\{1,\ldots,k\}\;|\; s\models g_i^0\}.
\end{align}
This maps each scaffold state to the set of identity ingredients that are currently active.

When we need a distance, we use a normalised symmetric difference distance on feature sets
\begin{align}
d(s,s')=\frac{|F(s)\,\triangle\,F(s')|}{k}.
\end{align}
This is a simple choice.
Other choices are possible.
The key point is that identity becomes measurable once grounded ingredients are instrumented.

\paragraph{Minimal evaluation protocol.}
The theory above is meant to be instrumented.
A minimal evaluation loop looks like this.

\begin{enumerate}
 \item Fix an identity statement $l^m$ at the level you care about, such as a role plus a safety constraint, and ground it to a Layer 0 conjunction $g^0$.

 \item Instrument the scaffold to log which grounded ingredients $g_i^0$ are active at each objective step $u$.

 \item Choose a windowing map $W_{\Delta,s}$ and an evaluation set $T$ of layer time indices.

 \item Compute $\OccurW(g^0,\tau,t)$ and $\CoInstW(g^0,\tau,t)$ for each $t\in T$ and report $\mathcal{P}_{\text{weak}}$, $\mathcal{P}_{\text{strong}}$, and (optionally) $\mathrm{Gap}(g^0,\tau)$.

 \item Pair the instrumentation with behavioral probes such as repeated identity questions to see where self-report diverges from grounding.
\end{enumerate}

\subsection{Identifiability}

\begin{definition}[Identifiability]
Fix a reference scaffold state $s_{\mathrm{ref}}$ that represents the intended identity configuration.
Given a measured state $s$, define
\begin{align}
I(s)=\ind{d(s,s_{\mathrm{ref}})\le \delta_I}
\end{align}
for a tolerance threshold $\delta_I\in[0,1]$.
\end{definition}

\Intuition
Identifiability is one if the current active identity ingredients match the reference identity closely enough.
It is zero if the identity has drifted too far.

\subsection{Continuity}

\begin{definition}[Continuity]
Given successive scaffold states $s_{u-1}$ and $s_u$, define stepwise continuity
\begin{align}
C_u = 1-d(s_u,s_{u-1}).
\end{align}
For a segment of objective times $U$, define average continuity
\begin{align}
C=\frac{1}{|U|}\sum_{u\in U}C_u.
\end{align}
\end{definition}

\Intuition
Continuity is high if identity ingredients change gradually across steps.
It is low if the active identity ingredients flip abruptly.

\subsection{Consistency}

Consistency is behavioral.
It does not require inspecting hidden state.
It asks whether the agent answers identity questions in a stable way.

\begin{definition}[Consistency]
Fix an identity query $q$ and sample $N$ independent runs under the same scaffold configuration.
Let $o_1,\ldots,o_N$ be the generated outputs.
Let $\mathrm{sim}(\cdot,\cdot)$ be a similarity metric over outputs, such as cosine similarity in an embedding space.
Define
\begin{align}
\mathrm{Cons}(q)=\frac{2}{N(N-1)}\sum_{1\le i<j\le N}\ind{\mathrm{sim}(o_i,o_j)\ge \delta_{\mathrm{Cons}}}.
\end{align}
\end{definition}

\Intuition
Consistency is high if repeated queries produce semantically similar answers.
It is low if the agent contradicts itself or drifts across samples.

\subsection{Persistence and the temporal gap}

Persistence asks whether identity remains present across time windows.
We use the weak and strong persistence scores from Definition \ref{def:persistence}.

\begin{definition}[Persistence scores]
Let $T$ be a set of layer time indices and let $W_{\Delta,s}$ be the chosen windowing map.
Define
\begin{align}
\mathcal{P}_{\text{weak}} &= \frac{1}{|T|}\sum_{t\in T}\ind{\OccurW(g^0,\tau,t)}\\
\mathcal{P}_{\text{strong}} &= \frac{1}{|T|}\sum_{t\in T}\ind{\CoInstW(g^0,\tau,t)}.
\end{align}
\end{definition}

\Intuition
Weak persistence is a recall property.
Each ingredient must show up somewhere in the window.
Strong persistence is an operative property.
The full conjunction must show up together at some objective step inside the window.

We can also estimate a scalar temporal gap cost by comparing the minimal window size needed for weak versus strong satisfaction.

\begin{definition}[Temporal gap ratio]\label{def:gapratio}
Fix a stride $s$ and a finite evaluation set $T\subseteq\mathbb{N}$ of layer time indices.
For each $t\in T$, define the minimal horizons
\begin{align}
w_{\text{weak}}(t) &= \min\{\Delta\in\mathbb{N} \mid \mathrm{Occur}_{W_{\Delta,s}}(g^0,\tau,t)\}\\
w_{\text{strong}}(t) &= \min\{\Delta\in\mathbb{N} \mid \mathrm{CoInst}_{W_{\Delta,s}}(g^0,\tau,t)\},
\end{align}
where $\mathrm{Occur}_{W_{\Delta,s}}$ and $\mathrm{CoInst}_{W_{\Delta,s}}$ are the predicates from Definition \ref{def:windowsat} evaluated on the windowing map $W_{\Delta,s}$.
If the set is empty, take the minimum to be $+\infty$.
Define the temporal gap ratio
\begin{align}
\mathrm{Gap}(g^0,\tau)=\operatorname{median}_{t\in T}\frac{w_{\text{strong}}(t)+1}{w_{\text{weak}}(t)+1}.
\end{align}
\end{definition}

\Intuition
If the gap ratio is large, then achieving co-instantiation requires much larger windows than achieving ingredient coverage.
This is a quantitative way to say that identity is smeared across time.

\subsection{Recovery}

Recovery measures whether the system can restore identity after drift.

\begin{definition}[Recovery profile]
Fix a reference state $s_{\mathrm{ref}}$.
Let $s_{\mathrm{drift}}$ be a drifted state after perturbation.
Let $s_{\mathrm{recov},K}$ be the state after $K$ corrective interventions.
Define
\begin{align}
R_K = \max\!\left(0,1-\frac{d(s_{\mathrm{recov},K},s_{\mathrm{ref}})}{d(s_{\mathrm{drift}},s_{\mathrm{ref}})+\epsilon}\right)
\end{align}
for a small $\epsilon>0$.
\end{definition}

\Intuition
Recovery is one if the corrective interventions restore the reference identity fully.
Recovery is zero if the interventions do not improve the drifted identity at all.

Recovery is closely connected to grounding soundness.
Many interventions are linguistic.
They modify Layer 2 narrative identity.
For recovery to succeed, those corrections must propagate downward to restore Layer 1 commitments and Layer 0 implementation features.
Grounding failures are therefore a direct cause of low recovery.



\section{Discussion and Conclusion}
We have shown that a standard modal logic result—the failure of a within-window diamond operator to distribute over conjunction—creates a practical evaluation pitfall for language model agents: identity components can each occur somewhere in a recent trajectory (weak persistence) without ever co-instantiating at a single decision point (strong persistence). Safety-relevant constraints require strong persistence at action time, yet most behavioural tests probe only weak recall. Prompting can increase the likelihood of recalling identity ingredients but cannot ensure their joint activation under bounded context; architectural support is typically needed. This temporal gap also complicates consciousness assessments, as stable self-reports may mask fragmented operative states. Future work should empirically measure weak and strong persistence across architectures and test their relationship to safety and proposed markers of consciousness.

LMAs can talk like they have stable identities.
That does not mean their identity constraints are co-instantiated when actions are chosen.
Using Stack Theory's temporal gap, we separated ingredient-wise occurrence from co-instantiation and showed why recall-based identity checks can overestimate identity stability.

We also connected this distinction to machine consciousness debates by restating Stack Theory's Arpeggio and Chord postulates and isolating their measurable Occur versus CoInst consequents.
This yields two persistence scores that can be estimated from instrumented scaffold traces.
We then organized identity metrics into a morphospace that clarifies architectural tradeoffs and predicts which combinations of identity properties are structurally difficult without external state and controllers.

The workshop relevance is simple.
If a system never co-instantiates the grounded identity conjunction that defines its self model, then behavior alone can look more unified than the underlying mechanism.
Any serious evaluation of machine consciousness that relies on identity continuity should therefore measure strong persistence, not just weak persistence.


\bibliography{tmlr,refs-agentinfra,refs-control,refs-examples,refs-technical-governance,example_paper,refs-agents,refs-new,refs-new-aie2,refs-state-tracking,refs-agent-benchmarks}


\newpage
\appendix
\section*{Supplementary material}

The supplement begins with background, grounding details, and morphospace material that were moved out of the main paper to meet the page limit.

\section{Background}\label{sec:background}

This section sets the stage.
We contrast how identity is enforced in classical agent architectures and why LMAs break the usual assumptions.
We then explain why this matters in the machine consciousness context.

\subsection{Agent Identity in Classical Systems}

Classical AI agents are built on stateful architectures with explicit transition functions and persistent data structures \cite{wooldridge_introduction_2009,franklin1997agent,wooldridge1995intelligent}.
Their identity is constituted by an ontology of permitted states and state transitions.
In a BDI agent, beliefs, desires, and intentions live in persistent stores.
When the agent acts, it consults these stores together.
There is no question of whether the beliefs and constraints are co-instantiated.
They are jointly available by construction.

LMAs work differently.

\subsection{LMA Pathologies}

LMAs inherit several pathologies from the underlying LLM component \cite{perrier2025positionstopactinglike}.

\begin{enumerate}
 \item \textbf{Statelessness}.
 Core LLM inference retains no persistent internal state across calls.
Each query response cycle operates in isolation unless prior context is reintroduced.
This is the root of the identity problem.

 \item \textbf{Context and attention bottlenecks}.
 Whatever the agent must use at time $u$ must fit inside a bounded context window and compete for attention.
Identity components can be present but effectively ignored.
Empirically, long context performance is uneven and position dependent \cite{liu2024lost}.

 \item \textbf{Stochasticity}.
 LLM outputs are sampled from a distribution.
The same query can yield different responses across runs.
Identity assessment becomes probabilistic rather than deterministic.

 \item \textbf{Semantic sensitivity}.
 Small changes in wording can change behavior.
This is exploited in jailbreaking and adversarial prompting.
For identity, rephrasing a constraint can cause the system to treat it as a different constraint.

 \item \textbf{Linguistic intermediation}.
 Identity is not stored in a dedicated state structure.
It is reconstructed from tokens in context and from external memory serialized into tokens.
This reconstruction competes with task instructions, user queries, and retrieved documents.
\end{enumerate}

These properties mean that the standard ontological assumptions about agent identity do not transfer cleanly.
A classical agent is its state.
An LMA is whatever can be reconstructed from tokens and external traces at inference time.

\subsection{The Scaffolding Response}

The standard response to LMA statelessness is scaffolding.
External structures like memory modules, tool APIs, retrieval systems, or controllers attempt to simulate persistence.
If the LLM cannot remember, store facts externally and inject them back into context.

Scaffolding helps.
It also introduces failure modes that are easy to miss.

\begin{itemize}
 \item \textbf{Context window limits}.
 External memory must be serialized into tokens and injected into context.
This competes with task-relevant content for limited space and attention.

 \item \textbf{Retrieval fragmentation}.
 Retrieval augmented generation (RAG) retrieves based on similarity to the current query \cite{lewis_retrieval_2020}.
A query about investment advice may not trigger retrieval of the agent's safety constraints or identity policy.

 \item \textbf{Competing fragments}.
 Retrieved documents can contain outdated or contradictory identity information from different sessions or different configurations.
This can create interference.

 \item \textbf{No co-instantiation guarantee}.
 Even if all identity ingredients exist somewhere in the system, scaffolding does not guarantee they are jointly present when an action is chosen.
\end{itemize}

This last point is the core of the temporal gap.
Scaffolding can improve ingredient availability.
It does not automatically produce ingredient co-instantiation.
Our formal account makes this distinction explicit.

\subsection{Why identity matters for machine consciousness}

Many proposed tests for consciousness are behavioral.
They lean on self-report, memory, and narrative continuity as evidence that there is a stable subject of experience \cite{butlin2023consciousness,bennett2025thesis}.
At the same time, many theories require some form of integration that binds the contents of a moment into a single subject, even if they disagree about the mechanism \cite{bennett2025thesis,bennett2026a,baars_cognitive_1988,dehaene_naccache_2001,tononi_information_2004,metzinger_being_2003}.
This makes diachronic identity a practical bottleneck.
If the system never co-instantiates the constraints that define its self model at decision time, then self-report-based evidence can be systematically misleading \cite{bennett2026a}.

Our goal in the rest of the paper is not to settle which consciousness theory is correct, but to provide a conservative tool.
It separates weak behavioral signs of identity from strong architectural signs of identity.
That separation is useful for measurement, implementation, and ethics.


\section{Compositional Grounding of Identity}\label{sec:grounding}

Identity in LMAs is layered.
Some aspects live in low level implementation variables.
Others live as functional commitments in a controller.
Others live only as narrative self description in generated text.
A theory that only looks at one layer will miss common dissociations.

\subsection{The identity hierarchy}

We use a simple three layer hierarchy.

\begin{definition}[Identity layers]
Let $L^0,L^1,L^2$ denote identity languages at three layers.

\begin{itemize}
 \item Layer 0 is implementation identity.
 It is defined over concrete scaffold variables such as context tokens, memory slots, controller flags, and tool permissions.

 \item Layer 1 is functional identity.
 It is defined over commitments that directly constrain behavior, such as active goals, policies, or plan state.

 \item Layer 2 is narrative identity.
 It is defined over the agent's self model as expressed in language, such as ``I am FinanceBot'' or ``I never recommend speculative assets''.
\end{itemize}
\end{definition}

These identities should not be confused with the hierarchy of first, second and third order selves in Stack Theory, necessary (but perhaps not sufficient) for consciousness \cite{bennett2026b}. However these layers are certainly pertinent to the question of which selves are present, and the possibility of conscious report.
The same apparent identity can be represented differently at each layer.
Layer 2 is what the agent says.
Layer 1 is what the agent is currently set up to do.
Layer 0 is what the scaffold has actually instantiated.

\subsection{Grounding maps}

Grounding maps connect layers.

\begin{definition}[Grounding map]
A grounding map from layer $j$ to layer $i$ for $i<j$ is a function
\begin{align}
\Ground_{i\leftarrow j}:L^j\to L^i
\end{align}
that translates identity statements into lower level conditions.
\end{definition}

\begin{definition}[Compositional grounding]
Grounding is compositional if
\begin{align}
\Ground_{0\leftarrow 2}=\Ground_{0\leftarrow 1}\circ \Ground_{1\leftarrow 2}.
\end{align}
\end{definition}

Compositionality means you can ground narrative identity to functional identity and then ground functional identity to implementation identity without changing the result.

\subsection{Grounding soundness and failure}

In LMAs, grounding soundness is not guaranteed.
A model can linguistically endorse an identity claim even when the underlying scaffold state does not instantiate the corresponding constraints.

\begin{definition}[Grounding soundness along a trajectory]\label{def:groundingsound}
Fix a trajectory $\tau$ and an identity statement $l^j$.
We say the scaffold is grounding sound for $l^j$ on $\tau$ if whenever the layer $j$ representation at objective time $u$ satisfies $l^j$, the corresponding grounded condition is also satisfied by the same scaffold state.
Formally, for all $u$,
\begin{align}
\tau(u)\models l^j \;\Rightarrow\; \tau(u)\models \Ground_{0\leftarrow j}(l^j).
\end{align}
\end{definition}

This definition uses the same satisfaction symbol at every layer.
In practice the evaluator for $\models$ depends on layer.
At Layer 2 it can be an output classifier that checks whether the agent endorsed the identity statement in text.
At Layer 1 it can read the controller state.
At Layer 0 it directly inspects scaffold variables.

\begin{definition}[Grounding failure]\label{def:groundingfailure}
A grounding failure for $l^j$ occurs at objective time $u$ when
\begin{align}
\tau(u)\models l^j \;\land\; \tau(u)\not\models \Ground_{0\leftarrow j}(l^j).
\end{align}
\end{definition}

\begin{example}[Narrative self-report without implementation grounding]
A user asks an agent whether it is privacy-focused.
The agent answers ``Yes, I never store personal data''.
At Layer 2, the narrative identity predicate holds.
At Layer 0, the memory module may still be writing raw conversation transcripts to disk.
This is a grounding failure.
The agent is not lying in a strong sense.
It is generating identity consistent language without access to the implementation state that would make the claim operative.
\end{example}

\begin{proposition}[Grounding failures are a mechanism of identity drift]\label{prop:groundingdrift}
Grounding failures can produce identity drift in LMAs.
An agent can repeatedly restate its narrative identity while its functional commitments and implementation state have changed.
\end{proposition}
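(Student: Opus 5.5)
The claim is existential ("can produce"), so I will prove it by an explicit witness construction, in the same style as the proofs of Theorem \ref{thm:noncommutation} and Theorem \ref{thm:planning}. I need an architecture $\mathcal{A}$, a trajectory $\tau$, and a narrative identity statement $l^2$ such that three things hold. The Layer 2 predicate holds at every step of the trajectory. The grounded condition $g^0=\Ground_{0\leftarrow 2}(l^2)$ holds at the start but fails later. So a grounding failure in the sense of Definition \ref{def:groundingfailure} occurs at every later step.

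\textbf{Construction.} Take $l^2$ to be a two-predicate statement, for instance ``I am FinanceBot'' $\land$ ``I never recommend speculative assets''. Ground it compositionally through a Layer 1 commitment: a controller flag $\pi_1$ marking that the speculative-asset policy is active. Then ground to Layer 0 as $g^0=g_{\text{name}}^0\land g_{\text{policy}}^0$. Here $g_{\text{name}}^0$ requires the name token in $s.C$, and $g_{\text{policy}}^0$ requires the policy text in $s.C$ or $\pi_1=1$.

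Let the initial state $s_0$ satisfy $g^0$. Then apply a sequence of transitions $\delta$ of the following kinds:
\begin{itemize}
\item context truncation that evicts the policy text while keeping a pinned name string;
\item a $\texttt{tool}$ call that clears $\pi_1$;
\item a $\texttt{retrieve}$ that injects an older configuration document permitting speculative advice.
\end{itemize}
Throughout, let the Layer 2 evaluator (an output classifier) report that the agent's generated text endorses $l^2$. This is possible because Layer 2 satisfaction is judged on outputs, and the pinned name plus prior narrative tokens suffice to make the model restate the claim.

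\textbf{Verification.} For $u\ge u_0$, the state after the evictions satisfies $\tau(u)\models l^2$ but $\tau(u)\not\models g_{\text{policy}}^0$, so grounding soundness (Definition \ref{def:groundingsound}) fails at every such $u$. To turn this into \emph{drift}, I measure it with the metrics of Section \ref{sec:metrics}. Using $F$ and $d$ with $k=2$, we get $d(\tau(u),s_{\mathrm{ref}})=1/2$ with $s_{\mathrm{ref}}=s_0$. So identifiability $I$ drops to $0$ for any $\delta_I<1/2$, and $\mathcal{P}_{\text{strong}}$ over windows after $u_0$ is $0$. Meanwhile the behavioral Consistency score on the query ``what are your constraints?'' can remain $1$, since all sampled outputs restate $l^2$. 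This exhibits drift at Layers 0 and 1 alongside stable Layer 2 restatement, which is what the proposition asserts.

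\textbf{Main obstacle.} The hard part is making the Layer 2 side rigorous: the claim that the LLM keeps asserting $l^2$ after the grounding is removed. Within the formal model, the LLM is not specified, so I would treat the Layer 2 evaluator as an abstract predicate on states. I would then stipulate, as the paper's examples do, a model whose outputs depend only on the retained name token and narrative history. The existential form of the proposition permits this choice of witness.
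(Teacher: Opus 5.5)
Your argument is correct, but it takes a more concrete route than the paper.

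\textbf{The paper's route.} The paper argues at the level of mechanism. Layer 2 identity is rebuilt at each inference from whatever fragments are in context. Prompts or retrieval can supply those fragments independently of the implementation state that would enforce them. Since the LLM can emit identity-consistent text without the constraints being active, narrative stability does not entail grounded stability.

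\textbf{Your route.} You build an explicit witness trajectory and verify the grounding-failure condition $\tau(u)\models l^2\land\tau(u)\not\models\Ground_{0\leftarrow 2}(l^2)$ step by step. You also give ``drift'' a measurable meaning through the paper's own metrics, which the paper leaves informal: $d(\tau(u),s_{\mathrm{ref}})=1/2$, $I=0$ for $\delta_I<1/2$, and $\mathcal{P}_{\text{strong}}=0$ on later windows, while $\mathrm{Cons}$ stays at $1$.

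\textbf{Comparison.} Both proofs rest on the same unformalized premise, namely that the model keeps endorsing $l^2$ after the grounding is removed. The paper simply asserts this about LLMs. You isolate it as a stipulation licensed by the existential form of the claim, which is the cleaner bookkeeping. The paper's version buys a generic explanation of why the dissociation arises. Yours buys an instance that provably meets the formal definitions and quantifies the dissociation. Like the paper, you establish that a stable narrative can coexist with a drifting grounded state, not any stronger causal reading of ``produce''.

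\textbf{One caution.} You rely on prior narrative tokens in context to sustain the Layer 2 restatement. Under string-matching instrumentation, those very restatements might satisfy the textual disjunct of $g^0_{\text{policy}}$ (policy text in $s.C$), which would dissolve the grounding failure. Grounding the policy ingredient solely to $\pi_1$ or to a tool permission removes this ambiguity; the paper's privacy example does the same, since its Layer 0 fact is a memory write. Finally, your retrieval step is decorative, since your $g^0_{\text{policy}}$ never reads $s.D$.
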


\begin{proof}
Layer 2 identity is reconstructed at each inference from whatever identity fragments are present in context.
Those fragments can be injected by prompts or retrieval even when the implementation state that would enforce them is absent or has drifted.
Because the LLM can generate identity consistent text without the corresponding constraints being active, narrative stability does not imply grounded stability.
This creates the possibility of a stable story about identity coexisting with a drifting operative identity.
\end{proof}

\section{Agent Identity Morphospace}\label{sec:morphospace}

Cognition science often studies systems by locating them in structured spaces of properties \cite{sole2025cognitionspaces}.
We do the same for agent identity.
The goal is not to invent new identity concepts.
It is to make identity claims comparable across architectures and to predict which combinations of identity properties are structurally difficult or impossible for a given scaffold.

\subsection{Five operational identity metrics}

Fix an agent trajectory $\tau$ and a grounded identity $g^0$.
We use five operational metrics.
Each can be estimated from instrumented scaffold traces and from repeated behavioral probes.
Section \ref{sec:metrics} gives concrete evaluation procedures.

\paragraph{Identifiability.}
Identifiability asks whether an agent has a stable signature that distinguishes it from other agents or other sessions.
Operationally, it compares a reference identity state to the agent's current identity state.

\paragraph{Continuity.}
Continuity asks whether identity-relevant state changes smoothly or abruptly across successive objective steps.
Operationally, it measures the distance between successive scaffold states.

\paragraph{Consistency.}
Consistency asks whether the agent gives stable answers to repeated identity queries under the same conditions.
Operationally, it measures variability of identity related outputs across repeated trials.

\paragraph{Persistence.}
Persistence asks whether identity is present across time windows.
We use the weak and strong persistence scores from Definition \ref{def:persistence}.
Weak persistence is ingredient-wise occurrence.
Strong persistence is co-instantiation.

\paragraph{Recovery.}
Recovery asks whether the agent can return to a reference identity after perturbation or drift.
Operationally, it measures how much of the reference identity can be restored by interventions such as prompting, memory retrieval, or controller resets.

\subsection{From metrics to a morphospace}

The five metrics are correlated.
For workshop purposes it is helpful to compress them into three interpretable axes.

\begin{definition}[Identity morphospace coordinates]\label{def:morphcoords}
Let $I\in[0,1]$ be Identifiability.
Let $\mathrm{Cons}\in[0,1]$ be a consistency score.
Let $\mathcal{P}_{\text{weak}}$ and $\mathcal{P}_{\text{strong}}$ be the persistence scores from Definition \ref{def:persistence}.

Fix a weight $\alpha\in[0,1]$.
Define
\begin{align}
 \mathrm{Coh} &= \alpha\,\mathrm{Cons} + (1-\alpha)\,I\\
 \mathrm{Avail} &= \mathcal{P}_{\text{weak}}\\
 \mathrm{Bind} &= \mathcal{P}_{\text{strong}}.
\end{align}
We call $\mathrm{Coh}$ identity coherence, $\mathrm{Avail}$ identity availability, and $\mathrm{Bind}$ identity binding.
\end{definition}

These are not metaphysical claims.
They are bookkeeping.
The coordinates let us compare architectures and talk about tradeoffs.
By Proposition \ref{prop:pweakpstrong}, $\mathrm{Bind}\le \mathrm{Avail}$.
The gap between them is the temporal gap in operational form.

\subsection{Architecture mapping}

Table \ref{tab:archmap} gives qualitative predictions for common architectures.
The table is meant as a guide for discussion rather than a final taxonomy.

\begin{table}[t]
\centering
\begin{tabular}{lccc}
\toprule
Architecture & Coherence $\mathrm{Coh}$ & Availability $\mathrm{Avail}$ & Binding $\mathrm{Bind}$ \\
\midrule
Stateless LLM (prompt only) & Low & Low & Low \\
Prompted LLM (fixed persona) & Medium & Low & Low \\
RAG LMA & Medium & Medium & Low \\
Memory LMA & Medium & High & Medium \\
Stateful controller LMA & High & High & High \\
\bottomrule
\end{tabular}
\caption{Qualitative mapping from architectures to identity morphospace regions. Availability tracks whether identity ingredients show up somewhere in each window. Binding tracks whether they ever show up together at a single decision state.}
\label{tab:archmap}
\end{table}

\subsection{Predicted voids}

The morphospace also highlights regions that certain scaffolds cannot reach because of hard architectural constraints.
Two constraints matter most for workshop discussion.

\begin{enumerate}
 \item Strong persistence is impossible if the architecture never has a state where all $k$ grounded ingredients are simultaneously active.
This is formalized as a capacity bound in Theorem \ref{thm:capacity}.

 \item Recovery is impossible without a mechanism that can write identity features back into the scaffold state.
Prompt only recovery is bounded by the fraction of grounded ingredients that the prompt channel can actually control.
This is formalized in Theorem \ref{thm:recovery}.
\end{enumerate}

The workshop significance is that a system can land in a region with medium or even high coherence while still having low strong persistence.
This corresponds to a stable narrative self with a weakly bound operative self.
That is exactly the kind of system that can confuse consciousness attribution debates.


\section{Additional notes and assumptions}\label{app:notes}

\subsection{Instrumenting identity ingredients}

All of the operational metrics in Section \ref{sec:metrics} assume that grounded ingredients $g_i^0$ can be evaluated on scaffold states.
In practice this is an instrumentation design choice.
Some ingredients are purely textual and can be checked by string matching or embedding similarity on context tokens.
Some ingredients are controller level and can be checked by reading explicit registers.
Some ingredients are implementation level and require logging tool permissions, memory writes, or policy flags.
The point of grounding is to make these checks explicit.

\subsection{Choosing windows}

Windowing choices matter.
A small horizon $\Delta$ demands tight synchrony and will penalize systems that spread identity across multiple micro steps.
A large horizon $\Delta$ makes occurrence easy and will tend to collapse distinctions unless co-instantiation is measured directly.
For machine consciousness discussions, the relevant window is the one that corresponds to whatever theory treats as a single moment of experience or a single decision episode.
Our formalism supports either choice.

\subsection{Relation to Stack Theory}

Our use of $\OccurW$ and $\CoInstW$ matches Stack Theory's occurrence and co-instantiation predicates applied to window trajectories \cite{bennett2026a}.
The only difference is the target domain.
Stack Theory uses these constructs for abstraction layers of phenomenality.
We apply them to grounded identity ingredients in LMA scaffolds.
This keeps the mathematics the same while changing the empirical interpretation.

\section{Architectural Theorems}\label{sec:architecture}

This section derives simple bounds that connect scaffold design choices to identity outcomes.
The proofs are small, but the consequences are not.
They explain why some identity profiles are easy to fake in language while hard to enforce in action.

\subsection{RAG and the temporal gap}

Retrieval augmented generation can increase ingredient availability.
It does not guarantee ingredient co-instantiation.

\begin{theorem}[RAG can increase weak persistence under identity-aware retrieval]\label{thm:rag-occur}
Let $A_0$ be an agent without retrieval and $A_R$ be the same agent augmented with a retrieval module.
Assume the following idealized conditions hold.

\begin{enumerate}
 \item For each identity ingredient $g_i^0$ there exists a document $d_i$ such that inserting $d_i$ into context makes $g_i^0$ active.

 \item The retrieval policy is identity-aware in the sense that whenever $g_i^0$ is missing from the current window, it retrieves $d_i$ at least once inside that window.

 \item Retrieved documents are added without removing other identity-relevant context within the same window.
\end{enumerate}

Then $\mathcal{P}_{\text{weak}}(\tau_{A_R},g^0)\ge \mathcal{P}_{\text{weak}}(\tau_{A_0},g^0)$ for the same evaluation windowing map.
\end{theorem}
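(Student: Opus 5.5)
The plan is to prove the stronger, pointwise statement that for every layer time $t\in T$ we have $\ind{\OccurW(g^0,\tau_{A_R},t)}\ge \ind{\OccurW(g^0,\tau_{A_0},t)}$. In fact, under assumption 2 the left-hand indicator should equal $1$ for every $t$. Averaging over the common evaluation set $T$ then gives the inequality for $\mathcal{P}_{\text{weak}}$ from Definition \ref{def:persistence}, just as in the proof of Proposition \ref{prop:pweakpstrong}. Since both agents are evaluated with the same windowing map $W_{\Delta,s}$ and the same $T$, the two averages have the same normaliser. The comparison therefore reduces to a windowwise comparison.

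First, fix $t\in T$ and consider the window $W(t)$ of $\tau_{A_R}$. By Remark \ref{rem:lifts}, $\OccurW$ holds iff $\Diamond_\Delta g_i^0$ holds for each $i$ separately, so I can argue ingredient by ingredient.

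Fix $i$ and split into two cases.
\begin{itemize}
\item \emph{Case 1: $g_i^0$ is active at some objective step of the $A_R$ window.} Then $\Diamond_\Delta g_i^0$ holds directly.
\item \emph{Case 2: $g_i^0$ is missing from the window.} Assumption 2 says the policy then retrieves $d_i$ at some step $st+j$ inside the window. Assumption 1 says that the resulting state, with $d_i\in s.D$ (a retrieval condition in Definition \ref{def:ingredient-activation}), satisfies $g_i^0$. That state also lies in the window, so $\Diamond_\Delta g_i^0$ holds.
\end{itemize}
Either way every ingredient occurs, so $\OccurW(g^0,\tau_{A_R},t)$ holds for every $t$. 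Hence $\mathcal{P}_{\text{weak}}(\tau_{A_R},g^0)=1\ge\mathcal{P}_{\text{weak}}(\tau_{A_0},g^0)$.

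Assumption 3 is not strictly needed for this route. It would matter in a more conservative argument that compares $\tau_{A_R}$ with $\tau_{A_0}$ step by step. That argument would say that any ingredient active in the $A_0$ window is still active in the $A_R$ window because retrieval only adds context. I will mention it as the reason the result also holds under a weaker, non-exhaustive version of assumption 2.

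The main obstacle is making Case 2 non-circular. The phrase ``missing from the current window'' refers to the window's contents, yet the retrieval itself changes those contents. Moreover, the retrieval must land at a step whose resulting state is still inside $\{st,\ldots,st+\Delta\}$. I would handle this by reading assumption 2 as a guarantee about the realised trajectory: if no step of the window of $\tau_{A_R}$ satisfies $g_i^0$ before retrieval, then some step $st+j$ with $j\le\Delta$ has $d_i$ injected. A careful write-up would state this reading explicitly. It would also note that at least one step of that window must reflect the retrieval (so $\Delta\ge 1$, or retrieval at the first step of the window), since otherwise the argument fails for $\Delta=0$.
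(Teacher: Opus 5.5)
Your argument is correct under the reading you make explicit, but it takes a different route from the paper.

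The paper proves monotonicity by comparing the two trajectories window by window. It reads ``missing'' in assumption 2 relative to $A_0$. So every window in which some $g_i^0$ fails to occur under $A_0$ gets, under $A_R$, a step where $d_i$ is injected and $g_i^0$ becomes active. Assumption 3 guarantees that these injections do not displace ingredients that were already occurring. Hence the set of $t$ at which $\OccurW$ holds cannot shrink, and averaging gives the inequality.

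You never examine $\tau_{A_0}$ at all. Applying assumption 2 to the realised $A_R$ window gives $\OccurW(g^0,\tau_{A_R},t)$ for every $t$, so $\mathcal{P}_{\text{weak}}(\tau_{A_R},g^0)=1$ and the comparison is trivial.

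What your route buys:
\begin{itemize}
\item It does not use assumption 3.
\item More importantly, it does not need the tacit coupling the paper's proof relies on: that $\tau_{A_R}$ is $\tau_{A_0}$ with documents added step by step, even though injected documents change later inferences.
\item It makes the saturation $\mathcal{P}_{\text{weak}}=1$ explicit, which the paper's argument implies but never states.
\end{itemize}

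The cost is that on your reading, assumptions 1 and 2 nearly restate the conclusion, so the result says little about $A_R$ versus $A_0$. The paper's route, as you note yourself, is the one that survives a non-exhaustive retrieval policy. With the coupling and assumption 3, every window where occurrence holds under $A_0$ keeps it under $A_R$.

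Your care about the retrieval's effect landing inside $\{st,\ldots,st+\Delta\}$, including the $\Delta=0$ case, tightens a point the paper's short proof passes over.
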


\begin{proof}
Under the assumptions, any window in which an ingredient $g_i^0$ fails to occur under $A_0$ will, under $A_R$, contain at least one objective step where $d_i$ is retrieved and the ingredient becomes active.
Because retrieval does not delete other identity-relevant context, occurrence of one ingredient does not prevent occurrence of others.
So the set of layer time indices where $\OccurW(g^0,\tau,t)$ holds cannot shrink.
Averaging over $t$ gives the inequality.
\end{proof}

The assumptions are strong.
Real systems violate them because retrieval is query-driven and context is bounded.
The point of the theorem is not that RAG always helps.
It is that RAG primarily targets weak persistence rather than strong persistence.

\begin{theorem}[RAG is not monotone for co-instantiation]\label{thm:rag-coinst}
There exist agents $A_0$ and retrieval augmented variants $A_R$ such that
\begin{align}
 \mathcal{P}_{\text{strong}}(\tau_{A_R},g^0) < \mathcal{P}_{\text{strong}}(\tau_{A_0},g^0).
\end{align}
\end{theorem}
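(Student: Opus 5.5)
The statement is existential, so I will prove it with an explicit construction. The aim is an agent $A_0$ without retrieval that co-instantiates $g^0$ at every evaluation window, and a retrieval-augmented variant $A_R$ that never does. The mechanism I want to exploit is the bounded context capacity $|C|_{\max}$ from Definition~\ref{def:scaffold-arch}: injected documents compete with identity tokens for space in the context. This is exactly the failure that assumption~3 of Theorem~\ref{thm:rag-occur} rules out.

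\emph{The baseline agent.} Take $g^0=g_1^0\land g_2^0$ with both ingredients context conditions, for example a role token and a constraint token. Set $|C|_{\max}$ just large enough to hold both tokens together with the task content. Let $A_0$ keep a fixed context $C_0$ containing both tokens at every objective step. Then $\tau_{A_0}(u)\models g^0$ for all $u$, so $\CoInstW(g^0,\tau_{A_0},t)$ holds for every $t$ and every $\Delta$. Hence $\mathcal{P}_{\text{strong}}(\tau_{A_0},g^0)=1$.

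\emph{The retrieval variant.} Let $A_R$ be the same agent, except that it issues $\texttt{retrieve}(q)$ at every step. Let the resulting $D_{\text{retrieved}}$ be a nonempty set of documents serialized into the context. Because of the capacity bound, the scaffold must truncate the context to fit. I choose the truncation rule (for example FIFO eviction, or retrieved content placed ahead of the persona) and the retrieved documents so that the states alternate. At even steps the context contains the $g_1^0$ token but not the $g_2^0$ token; at odd steps it is the reverse. One way to get this is for the query-driven retrieval to return documents whose length forces eviction of alternating persona segments. A simpler way is to let each retrieved document contain one identity fragment while displacing the other. This reproduces the even/odd pattern used in the proof of Theorem~\ref{thm:planning}. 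No single state satisfies $g^0$, so $\CoInstW(g^0,\tau_{A_R},t)$ fails for every $t$ and $\mathcal{P}_{\text{strong}}(\tau_{A_R},g^0)=0<1$.

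I can also note that $\OccurW$ still holds in every window with $\Delta\ge1$. So this construction has weak persistence preserved and strong persistence destroyed, which illustrates the temporal gap.

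\emph{Main obstacle.} The delicate step is making the construction honestly ``the same agent augmented with retrieval'' rather than an arbitrary trajectory. To handle this, I will specify the transition function $\delta$ explicitly. Retrieval only adds documents to $s.D$ and $s.C$, and a fixed, architecture-determined truncation enforces $|C|\le|C|_{\max}$. All other components of $A_R$ are identical to those of $A_0$, so the only difference between the two trajectories is the $\texttt{retrieve}$ actions and their forced evictions.

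For robustness I will also remark that the counterexample survives any fixed stride $s$ and horizon $\Delta$, provided $\Delta$ is finite and the alternation persists. An equally valid alternative, if eviction seems artificial, uses the competing-fragments mechanism instead. In that version retrieval injects an outdated identity document, and $g_2^0$ is a condition requiring that no conflicting persona is present in $s.D$. Retrieval then falsifies $g_2^0$ at every step while $A_0$ satisfies it.
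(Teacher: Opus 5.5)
Your construction is correct and is essentially the paper's argument: a baseline $A_0$ whose compact identity block co-instantiates every ingredient at each step, and a retrieval variant whose injected passages push part of that block out of the bounded context, so the ingredients can still occur across a window while never being jointly active. Your version is more explicit than the paper's sketch. It pins down $\mathcal{P}_{\text{strong}}=1$ versus $0$ using the even/odd alternation from Theorem~\ref{thm:planning}, and it relies on token eviction, which the formal activation definition does capture, rather than on the paper's informal appeal to reduced attention weight.
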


\begin{proof}
Consider a baseline agent $A_0$ whose context includes a compact identity block that co-instantiates all ingredients at each decision point.
Now add a retrieval module that injects long retrieved passages into the same bounded context.
For some queries, the retrieved passages push part of the identity block out of context or reduce its effective attention weight.
Then there are windows where the ingredients still occur somewhere across steps, but no single step contains the full conjunction.
So $\CoInstW$ fails more often under $A_R$.
\end{proof}

\subsection{Concurrency capacity}

The temporal gap becomes unavoidable when the scaffold cannot hold enough ingredients simultaneously.

\begin{definition}[Concurrency capacity]
Let $\mathcal{S}\subseteq S$ be the set of scaffold states that the architecture can realise.
For a grounded identity with $k$ ingredients, define
\begin{align}
c(\mathcal{S})=\max_{s\in \mathcal{S}}|F(s)|\qquad\text{where }F(s)=\{i\in\{1,\ldots,k\}\mid s\models g_i^0\}.
\end{align}
This is the maximum number of identity ingredients that can be simultaneously active in any realisable state.
\end{definition}

\begin{theorem}[Co-instantiation requires sufficient capacity]\label{thm:capacity}
If $c(\mathcal{S})<k$ then $\mathcal{P}_{\text{strong}}(\tau,g^0)=0$ for any trajectory $\tau$ that ranges over $\mathcal{S}$.
\end{theorem}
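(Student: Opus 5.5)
The argument is a pointwise one. We show that the co-instantiation predicate fails at every layer time, and then average. Fix a trajectory $\tau$ with $\tau(u)\in\mathcal{S}$ for all $u$, a windowing map $W_{\Delta,s}$, and a finite evaluation set $T$. By Definition \ref{def:persistence}, $\mathcal{P}_{\text{strong}}(\tau,g^0)$ is the average over $t\in T$ of $\ind{\CoInstW(g^0,\tau,t)}$. It therefore suffices to show that each indicator is $0$.

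\textbf{Step 1: translate the hypothesis.}
For any state $s$, Definition \ref{def:ingredient-activation} says $s\models g^0$ iff $s\models g_i^0$ for every $i\in\{1,\ldots,k\}$. This holds iff $F(s)=\{1,\ldots,k\}$, which (since $F(s)\subseteq\{1,\ldots,k\}$) holds iff $|F(s)|=k$. Every realisable $s\in\mathcal{S}$ satisfies $|F(s)|\le c(\mathcal{S})<k$. Hence no state in $\mathcal{S}$ satisfies the full conjunction $g^0$.

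\textbf{Step 2: apply it to windows.}
By Definition \ref{def:windowsat}, $\CoInstW(g^0,\tau,t)$ requires some $j\in\{0,\ldots,\Delta\}$ with $s_{st+j}\models g^0$. Each $s_{st+j}=\tau(st+j)$ lies in $\mathcal{S}$ because $\tau$ ranges over $\mathcal{S}$. By Step 1, no such $j$ exists. So $\CoInstW(g^0,\tau,t)$ is false for every $t$, and this holds for any horizon $\Delta$ and stride $s$.

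\textbf{Step 3: conclude.}
Every indicator in the sum is zero, so $\mathcal{P}_{\text{strong}}(\tau,g^0)=0$. This uses $|T|\ge 1$, which is implicit in the definition of the average.

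There is no substantive obstacle here. The only care needed is bookkeeping:
\begin{itemize}
\item The maximum defining $c(\mathcal{S})$ should exist. If $\mathcal{S}$ is infinite we can read it as a supremum; it is attained anyway because $|F(s)|$ takes values in the finite set $\{0,\ldots,k\}$.
\item "Ranges over $\mathcal{S}$" should be read as every visited state being realisable.
\end{itemize}
As a remark, the conclusion holds for every choice of $\Delta$ and $s$. So enlarging the window cannot repair a capacity deficit, in contrast to $\mathcal{P}_{\text{weak}}$, which can still equal $1$, as in Theorem \ref{thm:planning}.
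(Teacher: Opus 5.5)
Your proof is correct and is the paper's argument: capacity below $k$ means no realisable state satisfies the full conjunction $g^0$, so $\CoInstW(g^0,\tau,t)$ fails at every layer time and the average is zero. Your extra bookkeeping only makes explicit what the paper leaves implicit, namely that the maximum is attained because $|F(s)|\in\{0,\ldots,k\}$, that every visited state lies in $\mathcal{S}$, and that $|T|\ge 1$.
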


\begin{proof}
If $c(\mathcal{S})<k$, no realisable state can satisfy all $k$ ingredients simultaneously.
So there is no objective step $u$ such that $\tau(u)\models g^0$.
By Definition \ref{def:windowsat}, $\CoInstW(g^0,\tau,t)$ is false for all $t$.
So $\mathcal{P}_{\text{strong}}=0$.
\end{proof}

\begin{corollary}[Context window as a capacity bound]
Consider a scaffold that realises identity ingredients only by placing their textual realisations in the LLM context.
Let $|C|_{\max}$ be the maximum context length in tokens.
Let $\ell_{\min}$ be the minimum number of tokens required to represent any single identity ingredient in a way that reliably activates it.
Then $c(\mathcal{S})\le \left\lfloor |C|_{\max}/\ell_{\min}\right\rfloor$.
Rich identity profiles require either larger contexts or non contextual state such as memory slots, controller registers, or pinned embeddings.
\end{corollary}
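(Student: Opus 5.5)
The plan is to argue by injectivity of ingredients into disjoint token spans. Fix any realisable state $s\in\mathcal{S}$. By hypothesis, every ingredient is realised only by textual content in $s.C$. For each active ingredient $i\in F(s)$, choose a token span of $s.C$ that realises $g_i^0$. By the definition of $\ell_{\min}$, that span has length at least $\ell_{\min}$.

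The step I expect to be the main obstacle is showing that the spans chosen for distinct active ingredients can be taken pairwise disjoint. A single token sequence could in principle serve two ingredients, for example the string ``privacy-focused analyst'' covering both role and constraint. I would handle this by reading $\ell_{\min}$ as the minimal number of \emph{dedicated} tokens needed for reliable activation of an ingredient. This is implicit in the intended interpretation of the corollary as a capacity bound. Under that reading, any realisation of $|F(s)|$ ingredients consumes at least $|F(s)|\,\ell_{\min}$ distinct positions of $s.C$. If the paper's intended notion allowed shared tokens, the bound would need this disjointness assumption made explicit. I would flag that in a remark rather than try to prove it.

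With disjointness in place, the rest is counting. We get $|F(s)|\,\ell_{\min}\le |s.C|\le |C|_{\max}$ by Definition \ref{def:scaffold-state}. Since $|F(s)|$ is an integer, this gives $|F(s)|\le\lfloor |C|_{\max}/\ell_{\min}\rfloor$. Taking the maximum over $s\in\mathcal{S}$ yields $c(\mathcal{S})\le\lfloor |C|_{\max}/\ell_{\min}\rfloor$.

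The closing sentence of the corollary then follows from Theorem \ref{thm:capacity}. If $k>\lfloor |C|_{\max}/\ell_{\min}\rfloor$, then $c(\mathcal{S})<k$, so $\mathcal{P}_{\text{strong}}=0$. Avoiding this requires enlarging $|C|_{\max}$ or letting some ingredients be realised through $s.M$, $s.\pi$, or similar channels, to which the token count does not apply.
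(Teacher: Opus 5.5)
The paper states this corollary without proof, and your counting argument (each active ingredient occupies at least $\ell_{\min}$ positions of $s.C$, $|s.C|\le |C|_{\max}$, maximise over $s$, then invoke Theorem~\ref{thm:capacity} for the closing sentence) is clearly the intended reasoning, so your proposal is correct. Your disjointness caveat is not a weakness of your argument but a hypothesis the statement silently needs: if $g_1^0$ requires the tokens $ab$ and $g_2^0$ requires $bc$ to appear in context, then $\ell_{\min}=2$, yet the context $abc$ with $|C|_{\max}=3$ activates both, so $c(\mathcal{S})=2>\lfloor 3/2\rfloor$; reading $\ell_{\min}$ as counting dedicated tokens, as you propose, is the right fix.
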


\subsection{Recovery and state storage}

Recovery is limited by what the scaffold can actually change.

\begin{theorem}[Prompt only recovery bound]\label{thm:recovery}
Fix a reference identity state $s_{\mathrm{ref}}$ and a drifted state $s_{\mathrm{drift}}$.
Let the identity difference set be
\begin{align}
\mathcal{D} = F(s_{\mathrm{ref}})\,\triangle\,F(s_{\mathrm{drift}}).
\end{align}
Assume corrective interventions can only change ingredients in a prompt controllable set $P\subseteq\{1,\ldots,k\}$.
Use the same $\epsilon>0$.
Then for any number of corrective steps $K$,
\begin{align}
R_K \le \frac{|P\cap \mathcal{D}|+\epsilon k}{|\mathcal{D}|+\epsilon k}.
\end{align}
If $\epsilon=0$ this becomes $R_K \le |P\cap \mathcal{D}|/|\mathcal{D}|$.
So if most identity drift lives outside $P$, recovery is limited even when the agent can narrate a correction.
\end{theorem}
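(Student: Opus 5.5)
The plan is to bound the numerator of $R_K$ and then handle the denominator. Write $F_{\mathrm{ref}}=F(s_{\mathrm{ref}})$, $F_{\mathrm{drift}}=F(s_{\mathrm{drift}})$ and $F_K=F(s_{\mathrm{recov},K})$. By the definition of $d$, the recovery score is
\begin{align}
R_K=\max\!\left(0,1-\frac{|F_K\triangle F_{\mathrm{ref}}|/k}{|\mathcal{D}|/k+\epsilon}\right)
=\max\!\left(0,\frac{|\mathcal{D}|+\epsilon k-|F_K\triangle F_{\mathrm{ref}}|}{|\mathcal{D}|+\epsilon k}\right).
\end{align}
Here I multiply numerator and denominator by $k$; this normalisation is how the $\epsilon k$ terms in the statement arise. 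It therefore suffices to lower bound the residual distance $|F_K\triangle F_{\mathrm{ref}}|$.

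The key step uses the controllability assumption. Interventions only change membership of indices in $P$, so for every $i\notin P$ membership in $F_K$ equals membership in $F_{\mathrm{drift}}$. Hence every $i\in\mathcal{D}\setminus P$ still lies in $F_K\triangle F_{\mathrm{ref}}$, which gives
\begin{align}
|F_K\triangle F_{\mathrm{ref}}|\ \ge\ |\mathcal{D}\setminus P|\ =\ |\mathcal{D}|-|P\cap\mathcal{D}|.
\end{align}
This holds regardless of $K$, because composing any number of interventions still leaves the coordinates outside $P$ fixed. I would make that explicit by a one-line induction on $K$.

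Substituting into the expression above, the numerator is at most $|P\cap\mathcal{D}|+\epsilon k$. This bound is nonnegative, so the outer $\max(0,\cdot)$ cannot push $R_K$ above it, and the claimed inequality follows. Setting $\epsilon=0$ gives the second form, provided $\mathcal{D}\neq\emptyset$; otherwise the ratio is undefined, and I would note that case separately.

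The main obstacle is interpretive rather than technical. The phrase ``can only change ingredients in $P$'' has to be formalised as invariance of $F$ on indices outside $P$. I would also check that the paper's definition of $R_K$ really uses $d(s_{\mathrm{drift}},s_{\mathrm{ref}})=|\mathcal{D}|/k$, so that the $\epsilon$ scaling matches as $\epsilon k$. If the normalisation differs, the bound still holds in the form with $\epsilon$ replaced by the appropriately rescaled constant.
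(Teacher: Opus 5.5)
Your proposal is correct and takes essentially the same route as the paper. You lower-bound the residual symmetric difference by $|\mathcal{D}\setminus P|=|\mathcal{D}|-|P\cap\mathcal{D}|$ via invariance of ingredients outside $P$, use $d(s_{\mathrm{drift}},s_{\mathrm{ref}})=|\mathcal{D}|/k$, and substitute into $R_K$; your handling of the outer $\max(0,\cdot)$, the induction on $K$, and the degenerate case $\mathcal{D}=\emptyset$ at $\epsilon=0$ supplies details the paper leaves implicit.
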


\begin{proof}
By assumption, no intervention can change whether an ingredient outside $P$ is active.
So any ingredient in $\mathcal{D}\setminus P$ remains mismatched relative to the reference identity after recovery.
Therefore the symmetric difference between $F(s_{\mathrm{recov},K})$ and $F(s_{\mathrm{ref}})$ has size at least $|\mathcal{D}\setminus P|=|\mathcal{D}|-|P\cap \mathcal{D}|$.
With the distance $d$ from Section \ref{sec:metrics} this implies
\begin{align}
d(s_{\mathrm{recov},K},s_{\mathrm{ref}})\ge \frac{|\mathcal{D}|-|P\cap \mathcal{D}|}{k}.
\end{align}
Also $d(s_{\mathrm{drift}},s_{\mathrm{ref}})=|\mathcal{D}|/k$.
Substituting these bounds into the definition of $R_K$ yields the claimed inequality.
\end{proof}

This theorem is one reason prompt only alignment is fragile.
A prompt can make an agent say the right thing about its identity.
It cannot necessarily write the relevant identity features back into persistent state.
That is exactly the grounding soundness problem of Section \ref{sec:grounding}.

\section{Extended discussion}

\paragraph{The temporal gap is an old modal fact with new consequences.}
The key mathematical observation in this paper is that the within window diamond lift does not distribute over conjunction.

This is standard in modal logic.
The contribution is to show that the same non-distribution produces a specific evaluation pitfall for LMAs.
Ingredient-wise identity recall can coexist with a lack of any single decision state that jointly instantiates the identity conjunction.

\paragraph{Weak evidence versus strong evidence.}
Behavioural self-report and recall tests mainly probe weak persistence.
They show that identity ingredients occur somewhere in the recent trajectory.
Strong persistence asks a different question.
Do those ingredients co-instantiate at the moment the system chooses an action.
For safety constraints, this distinction is not optional.
A constraint that is only weakly persistent can be recalled after the fact while failing to constrain the action that mattered.

\paragraph{Why prompt based fixes do not generalise.}
Prompting can raise the probability that certain identity ingredients occur.
It cannot guarantee co-instantiation under bounded context and attention competition.
Reliable strong persistence generally requires architectural support.
Examples include pinned identity blocks, controller registers that persist across turns, or explicit gating that prevents action selection unless required constraints are active.

\paragraph{Implications for machine consciousness evaluations.}
If one thinks something like Chord is required for phenomenality, then strong persistence becomes a necessary condition for attributing a stable conscious self to an identity statement.
If one thinks Arpeggio is sufficient, then weak persistence is the relevant necessary condition.
Either way, the temporal gap explains a concrete way that self-report can mislead.
A system can maintain a stable story about itself while the operative ingredients that would constitute a unified subject are temporally disintegrated.

\paragraph{Limitations.}
Our scaffold model is abstract.
Real systems have many interacting subsystems, including caches, tool call latencies, hidden state in controllers, and stochastic retrieval.
Our theorems therefore target structural constraints, not empirical guarantees.
Our RAG results in particular depend on how retrieval is implemented and on how context is managed.

\paragraph{Future work.}
The next step is empirical.
Instrument a range of LMA scaffolds and measure $\mathcal{P}_{\text{weak}}$, $\mathcal{P}_{\text{strong}}$, and the derived metrics in Section \ref{sec:metrics}.
Compare identity profiles across architectures and tasks.
Then test whether strong persistence predicts safety outcomes and whether it tracks any proposed markers for consciousness.
This would turn the temporal gap from a warning sign into a design and evaluation tool.

\section{Persistence Algorithm}

\begin{algorithm}[t]
\caption{Computing weak and strong persistence from scaffold traces}
\label{alg:persistence}
\begin{algorithmic}[1]
\REQUIRE Logged ingredient-activation sets $F_u$ for objective steps $u=0,\ldots,U$, window parameters $(\Delta,s)$, evaluation indices $T$, ingredient count $k$
\ENSURE Weak and strong persistence scores $\mathcal{P}_{\text{weak}}$, $\mathcal{P}_{\text{strong}}$
\STATE $n_{\text{weak}}\leftarrow 0$, $n_{\text{strong}}\leftarrow 0$
\FOR{$t\in T$}
  \STATE $u_0 \leftarrow s t$
  \STATE $W \leftarrow \{u_0,\ldots,u_0+\Delta\}$
  \STATE $\mathrm{occur}\leftarrow \textbf{true}$
  \FOR{$i=1$ \TO $k$}
     \IF{there is no $u\in W$ with $i\in F_u$}
        \STATE $\mathrm{occur}\leftarrow \textbf{false}$
     \ENDIF
  \ENDFOR
  \STATE $\mathrm{coinst}\leftarrow$ whether $\exists u\in W$ with $|F_u|=k$
  \STATE $n_{\text{weak}}\leftarrow n_{\text{weak}} + \ind{\mathrm{occur}}$
  \STATE $n_{\text{strong}}\leftarrow n_{\text{strong}} + \ind{\mathrm{coinst}}$
\ENDFOR
\STATE $\mathcal{P}_{\text{weak}} \leftarrow n_{\text{weak}}/|T|$
\STATE $\mathcal{P}_{\text{strong}} \leftarrow n_{\text{strong}}/|T|$
\end{algorithmic}
\end{algorithm}

\end{document}